\newcommand{\figleft}{{\em (Left)}}
\newcommand{\figright}{{\em (Right)}}
\newcommand{\captiona}{{\em (a)}}
\newcommand{\captionb}{{\em (b)}}
\newcommand{\captionc}{{\em (c)}}
\newcommand{\captiond}{{\em (d)}}
\def\Figref#1{Figure~\ref{#1}}
\def\Secref#1{Section~\ref{#1}}
\def\eqref#1{equation~(\ref{#1})}
\def\Eqref#1{Equation~(\ref{#1})}
\def\1{\bm{1}}
\def\vtheta{{\bm{\theta}}}
\def\vv{{\bm{v}}}
\def\mF{{\bm{F}}}
\def\mH{{\bm{H}}}
\def\mI{{\bm{I}}}
\DeclareMathAlphabet{\mathsfit}{\encodingdefault}{\sfdefault}{m}{sl}
\SetMathAlphabet{\mathsfit}{bold}{\encodingdefault}{\sfdefault}{bx}{n}
\def\gD{{\mathcal{D}}}
\def\gL{{\mathcal{L}}}
\newcommand{\normltwo}{L^2}
\newcommand{\mOmega}{\bm{\Omega}}
\newcommand{\diag}{\textnormal{diag}}
\newcommand{\ky}[1]{#1}
\icmltitlerunning{Overcoming Multi-model Forgetting}
\begin{document}
	
	\twocolumn[
	\icmltitle{Overcoming Multi-model Forgetting}
	
	
	
	\icmlsetsymbol{equal}{*}
	
	\begin{icmlauthorlist}
		\icmlauthor{Yassine Benyahia}{equal,iom}
		\icmlauthor{Kaicheng Yu}{equal,cvlab}
		\icmlauthor{Kamil Bennani-Smires}{swisscom}
		\icmlauthor{Martin Jaggi}{mlo}
		\icmlauthor{Anthony Davison}{iom}
		\icmlauthor{Mathieu Salzmann}{cvlab}
		\icmlauthor{Claudiu Musat}{swisscom}
	\end{icmlauthorlist}
	
	\icmlaffiliation{iom}{Institute of mathematics, EPFL}
	\icmlaffiliation{cvlab}{Computer vision lab, EPFL}
	\icmlaffiliation{mlo}{Machine learning and optimization lab, EPFL}
	\icmlaffiliation{swisscom}{Artificial Intellegence Lab, Swisscom}
	
	\icmlcorrespondingauthor{Yassine Benyahia}{yassine.benyahia1@gmail.com}
	\icmlcorrespondingauthor{Kaicheng Yu}{kaicheng.yu@epfl.ch}
	
	\icmlkeywords{machine learning, forgetting, neural architecture search, regularization}
	
	\vskip 0.3in
	]
	
	
	
	\printAffiliationsAndNotice{\icmlEqualContribution} 

\newcommand{\fix}{\marginpar{FIX}}
\newcommand{\new}{\marginpar{NEW}}


\begin{abstract}    
We identify a phenomenon, which we refer to as \emph{multi-model forgetting},
that occurs when sequentially training multiple deep networks with partially-shared parameters; the performance of previously-trained models degrades as one optimizes a subsequent one, due to the overwriting of shared parameters. To overcome this, we introduce a statistically-justified weight plasticity loss that regularizes the learning of a model's shared parameters according to their importance for the previous models, and demonstrate its effectiveness when training two models sequentially and for neural architecture search. Adding weight plasticity in neural architecture search preserves the best models to the end of the search and yields improved results in both natural language processing and computer vision tasks.
\end{abstract}

\section{Introduction}

\begin{figure*}[t]
    \center
    \includegraphics[width=\textwidth]{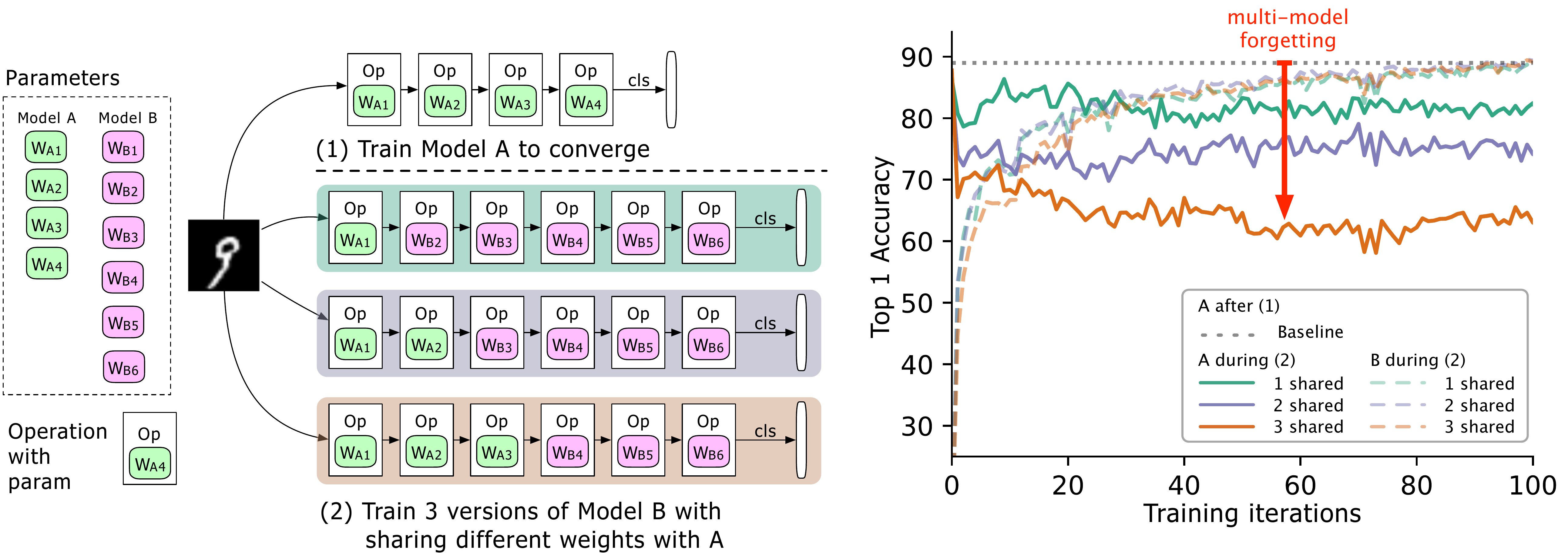}
    \caption{\figleft\, Two models to be trained (A, B), 
    where A's parameters are in green and B's in purple, and B shares some parameters with A~(indicated in green during phase 2).
    We first train A to convergence and then train B. \figright\, Accuracy of model A as the training of B progresses. The different colors correspond to different numbers of shared layers. The accuracy of A decreases dramatically, especially when more layers are shared, and we refer to the drop~(the red arrow) as multi-model forgetting.
    This experiment was performed on MNIST \citep{lecun10}. 
    }\label{fig:intro}
  \end{figure*}
  
Deep neural networks have been very successful for tasks such as visual recognition \citep{Xie2017} and natural language processing \citep{Young17}, and much recent work has addressed the training of models that can generalize across multiple tasks~\citep{Caruana1997}. 
In this context, when the tasks become available sequentially, a major challenge is \emph{catastrophic forgetting}:\ when a model initially trained on task A is later trained on task B, its performance on task A can decline calamitously. Several recent articles have addressed this problem~\citep{Kirkpatrick17,Rusu16,HeJ17,li16a}. In particular, \citet{Kirkpatrick17} show how to overcome catastrophic forgetting by approximating
the posterior probability, $p(\vtheta \mid \mathcal{D}_1, \mathcal{D}_2)$, with $\vtheta$ the network parameters and $\mathcal{D}_1,\mathcal{D}_2$ different datasets representing the tasks.

In many situations one does not train \emph{a single model for multiple tasks}\ but \emph{multiple models for a single task}.
When dealing with many large models, a common strategy to keep training tractable is to share a subset of the weights across the multiple models and to train them sequentially~\citep{Pham18, Xie2017, Liu18a}. This strategy  has a major drawback. Figure~\ref{fig:intro} shows that for two models, A and B, the larger the number of shared weights, the more the accuracy of A drops when training B; B overwrites some of the weights of A and this damages the performance of A. We call this \emph{multi-model forgetting}. The benefits of weight-sharing have been emphasized in tasks like neural architecture search, where the associated speed gains have been key in making the process practical~\citep{Pham18,Liu18b}, but its downsides remain virtually unexplored.


In this paper we introduce an approach to overcoming multi-model forgetting.
Given a dataset $\mathcal{D}$, we first consider two models $f_1(\mathcal{D};\vtheta_1, \vtheta_s)$ and $f_2(\mathcal{D};\vtheta_2, \vtheta_s)$ with shared weights $\vtheta_s$ and private weights $\vtheta_1$ and $\vtheta_2$. We formulate learning as the maximization of the posterior $p(\vtheta_1, \vtheta_2, \vtheta_s | \mathcal{D})$. Under mild assumptions we show that this posterior can be approximated and expressed using a loss, dubbed Weight Plasticity Loss (WPL), that minimizes multi-model forgetting.
Our framework evaluates the importance of each weight, conditioned on the previously-trained model, and encourages the update of each shared weight to be inversely proportional to its importance.  We then show that our approach extends to more than two models by exploiting it for neural architecture search.

Our work is the first 
to propose a solution to multi-model forgetting.
We establish the merits of our approach when training two models with partially shared weights and in the context of neural architecture search. For the former, we establish the effectiveness of WPL in the strict convergence case, where each model is trained until convergence, and in the more realistic loose convergence setting, where training is stopped early. WPL can reduce the forgetting effect by $99\%$ when model A converges fully, and by $52\%$ in the loose convergence case.

For neural architecture search, we implement WPL within the efficient ENAS method of~\citet{Pham18}, a state-of-the-art technique that relies on parameter sharing and corresponds to the loose convergence setting.  We show that, at each iteration, 
the use of WPL reduces the forgetting
effect by $51\%$ on the most affected model and by $95\%$ on average over all sampled models. Our final results on the best architecture found by the search confirm that limiting 
multi-model forgetting yields better results and better convergence for both language modeling (on the PTB dataset~\citep{Marcus94}) and image classification (on the CIFAR10 dataset~\citep{Krizhevsky09}). For language modeling the perplexity decreases from 65.01 for ENAS without WPL to 61.9 with WPL. For image classification WPL yields a drop of top-1 error from 4.87\% to 3.81\%. \ky{We also adapt our method to NAO~\citep{Luo18} and show that it also significantly reduces multi-model forgetting.} 
We will make our code publicly available upon acceptance of this paper.


\section{Related work}

\textbf{Single-model Forgetting}. The goal of training a single model to tackle multiple problems is to leverage the structures learned for one task for other tasks. This has been employed in transfer learning~\citep{Pan10}, multi-task learning~\citep{Caruana1997} and lifelong learning~\citep{Silver13}. 
However, sequential learning of later tasks has  visible negative consequences for the initial one. \citet{Kirkpatrick17} selectively slow down the learning of the weights that are comparatively important for the first task by defining the importance of an individual weight using its Fisher information~\citep{Rissanen96}. \citet{HeJ17} project the gradient so that directions relevant to the previous task are unaffected. 
Other families of methods save the older models separately to create progressive networks~\citep{Rusu16} or use regularization 
to force the parameters to remain close to the values obtained by previous tasks while learning new ones~\citep{li16a}. 
In~\citep{Xu18}, forgetting is avoided altogether by fixing the parameters of the first model while complementing the second one with additional operations found by an architecture search procedure. This work, however, does not address the multi-model forgetting that occurs during the architecture search.
An extreme case of sequential learning is lifelong learning, for which 
the solution to catastrophic forgetting developed by~\citet{Aljundi18} is also to prioritize the weight updates, with smaller updates for weights that are important for previously-learned tasks. 

\textbf{Parameter Sharing in Neural Architecture Search.} In both  sequential learning on multiple tasks and lifelong learning, the forgetfulness concerns an individual model. Here we tackle scenarios where one seeks to optimize a population of multiple models that share parts of their internal structure.
The use of multiple models to solve a single task dates back to model ensembles~\citep{Dietterich00}. 
Recently, sharing weights between models that are candidate solutions to a problem has shown great promise in the generation of custom neural architectures, known as neural architecture search~\citep{Elsken18}.
Existing neural architecture search strategies mostly divide into reinforcement learning and evolutionary techniques. 
For instance,~\citet{Zoph2017} use reinforcement learning to explore a search space of candidate architectures, 
with each architecture encoded as a string using an RNN trained with REINFORCE~\citep{Williams92} and taking validation performance as the reward.  MetaQNN~\citep{Baker17} uses Q-Learning to design CNN architectures.
By contrast, neuro-evolution strategies use evolutionary algorithms~\citep{Back96} to perform the search. An example is the work of~\citet{Liu18a}, who introduce a hierarchical representation of neural networks and use tournament selection~\citep{Goldberg91} to evolve the architectures.

Initial search solutions required hundreds of GPUs due to the huge search space, but recent efforts have made the search more tractable, for example via the use of neural blocks~\citep{Negrinho17, Bennani18}.
Similarly, and directly related to this work, \emph{weight sharing}\ between the candidates has allowed researchers to greatly decrease the computational cost of neural architecture search. For neuro-evolution methods, sharing is implicit. For example,~\citet{Esteban2017} define weight inheritance as allowing the children to inherit their parents' weights whenever possible.
For RL-base techniques, weight sharing is modeled explicitly and has been shown to lead to significant gains. In particular, ENAS~\citep{Pham18}, which builds upon NAS~\citep{Zoph2017}, represents the search space as a single directed acyclic graph (DAG) in which each candidate architecture is a subgraph. 
EAS~\citep{Cai18} also uses an RL strategy to grow the network depth or layer width with function-preserving transformations defined by~\citet{Tianqi16} where they initialize new models with previous parameters. 
DARTS~\citep{Liu18b} uses soft assignment to select paths that implicitly inherit the previous weights. NAO~\citep{Luo18} replaces the reinforcement learning portion of ENAS with a gradient-based auto-encoder that directly exploits weight sharing.
While weight sharing has proven effective, its downsides have never truly been studied. \citet{Bender18} realized that training was unstable and proposed to circumvent this issue by randomly dropping network paths. However, they did not analyze the reasons underlying the training instability.
Here, by contrast, we highlight the underlying multi-model forgetting problem and introduce a statistically-justified solution that further improves on path dropout.

\section{Methodology}
\label{sec:wpl}

In this section we study the training of multiple models that share certain parameters.
As discussed above, training the multiple models sequentially as in~\citep{Pham18}, for example, is suboptimal, since multi-model forgetting arises.
Below we derive a method to overcome this for two models, and then show how our formalism extends to multiple models in the context of neural architecture search, and in particular within ENAS~\citep{Pham18}.

\subsection{Weight Plasticity Loss: Preventing Multi-model Forgetting}

Given a dataset $\gD$, we seek to train two architectures $f_1(\gD;\vtheta_{1}, \vtheta_s)$ and $f_2(\mathcal{D};\vtheta_2, \vtheta_s)$ with shared parameters $\vtheta_s$ and private parameters $\vtheta_1$ and $\vtheta_2$.  
We suppose that the models are trained sequentially, which reflects common large-model, large-dataset scenarios and will facilitate generalization. Below, we derive a statistically-motivated framework that prevents multi-model forgetting;
it stops the training of the second model from degrading the performance of the first model.

We formulate training as finding the parameters $\vtheta = (\vtheta_1, \vtheta_2, \vtheta_s)$ that maximize the posterior probability $p(\vtheta \mid \gD)$, which we approximate to derive our new loss function. Below we discuss the different steps of this approximation, first expressing $p(\vtheta \mid \gD)$ more conveniently.

\begin{restatable}{lemma}{lemmaone}
\label{lem1}
Given a dataset $\mathcal{D}$ and two architectures with shared parameters $\vtheta_s$ and private parameters $\vtheta_1$ and $\vtheta_2$, and provided that $p(\vtheta_1, \vtheta_2 \mid \vtheta_s, \mathcal{D}) = p(\vtheta_1 \mid \vtheta_s, \mathcal{D})p(\vtheta_2 \mid \vtheta_s, \mathcal{D})$, we have
\begin{equation}
\label{eq:bayes2}
p(\vtheta_1, \vtheta_2, \vtheta_s \mid \mathcal{D}) \propto \frac{p(\mathcal{D} \mid \vtheta_2, \vtheta_s) p(\vtheta_1, \vtheta_s) p(\vtheta_2, \vtheta_s)}{\int p(\mathcal{D} \mid \vtheta_1, \vtheta_s) p(\vtheta_1, \vtheta_s) d\vtheta_1}.\vspace{-2mm}
\end{equation}
\end{restatable}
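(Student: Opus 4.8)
The plan is a short chain of Bayes'-rule identities: expand the joint posterior, split off one factor using the stated conditional independence, and recognize the denominator of \eqref{eq:bayes2} as the marginal evidence $p(\mathcal{D},\vtheta_s)$ rewritten as an integral over $\vtheta_1$. Besides the hypothesis, I use the standard assumption of this setting that the data being fitted is explained by the architecture currently being updated, $p(\mathcal{D}\mid\vtheta_1,\vtheta_2,\vtheta_s)=p(\mathcal{D}\mid\vtheta_2,\vtheta_s)$, together with a prior in which the two private parameter blocks are independent given $\vtheta_s$; these are what let the likelihood factor attached to $\vtheta_1$ drop out at the end.

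Concretely, I first rewrite the hypothesis $p(\vtheta_1,\vtheta_2\mid\vtheta_s,\mathcal{D})=p(\vtheta_1\mid\vtheta_s,\mathcal{D})\,p(\vtheta_2\mid\vtheta_s,\mathcal{D})$ in the conditional form $p(\vtheta_2\mid\vtheta_1,\vtheta_s,\mathcal{D})=p(\vtheta_2\mid\vtheta_s,\mathcal{D})$ (divide both sides by $p(\vtheta_1\mid\vtheta_s,\mathcal{D})$), and apply the chain rule to get $p(\vtheta_1,\vtheta_2,\vtheta_s\mid\mathcal{D}) = p(\vtheta_2\mid\vtheta_s,\mathcal{D})\,p(\vtheta_1,\vtheta_s\mid\mathcal{D})$. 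Bayes' rule on the first factor gives $p(\vtheta_2\mid\vtheta_s,\mathcal{D}) = p(\mathcal{D}\mid\vtheta_2,\vtheta_s)\,p(\vtheta_2\mid\vtheta_s)/p(\mathcal{D}\mid\vtheta_s)$, and multiplying numerator and denominator by $p(\vtheta_s)$ turns it into $p(\mathcal{D}\mid\vtheta_2,\vtheta_s)\,p(\vtheta_2,\vtheta_s)/p(\mathcal{D},\vtheta_s)$, which already exhibits two of the three numerator factors of \eqref{eq:bayes2}. Bayes' rule on the second factor gives $p(\vtheta_1,\vtheta_s\mid\mathcal{D}) = p(\mathcal{D}\mid\vtheta_1,\vtheta_s)\,p(\vtheta_1,\vtheta_s)/p(\mathcal{D})$, contributing the third numerator factor $p(\vtheta_1,\vtheta_s)$, the overall constant $p(\mathcal{D})$, and a residual factor $p(\mathcal{D}\mid\vtheta_1,\vtheta_s)$ which, by marginalizing out $\vtheta_2$ and using the prior independence, equals $p(\mathcal{D}\mid\vtheta_s)$ and hence carries no dependence on $\vtheta_1$ or $\vtheta_2$.

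The single step with real content is rewriting the evidence in the remaining denominator as a marginal over the private parameter: $p(\mathcal{D},\vtheta_s) = \int p(\mathcal{D},\vtheta_1,\vtheta_s)\,d\vtheta_1 = \int p(\mathcal{D}\mid\vtheta_1,\vtheta_s)\,p(\vtheta_1,\vtheta_s)\,d\vtheta_1$, which is exactly the integral appearing in \eqref{eq:bayes2}. Collecting the pieces and absorbing into the proportionality constant every factor that does not depend on $\vtheta_1$ and $\vtheta_2$ — namely $p(\mathcal{D})$ and $p(\mathcal{D}\mid\vtheta_s)$ — leaves the numerator $p(\mathcal{D}\mid\vtheta_2,\vtheta_s)\,p(\vtheta_2,\vtheta_s)\,p(\vtheta_1,\vtheta_s)$ divided by $\int p(\mathcal{D}\mid\vtheta_1,\vtheta_s)\,p(\vtheta_1,\vtheta_s)\,d\vtheta_1$, which is \eqref{eq:bayes2}.

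I expect the main obstacle to be not any individual identity but the bookkeeping around the proportionality sign: one has to keep straight which of $p(\mathcal{D})$, $p(\mathcal{D}\mid\vtheta_s)$ and $p(\mathcal{D}\mid\vtheta_1,\vtheta_s)$ are genuinely inert, and it is easy to slide into a circular argument. The safeguard is to perform every manipulation at the level of \emph{joint} densities, using only $p(A,B)=p(A\mid B)\,p(B)$ and marginalization, and to pass to the proportionality statement only in the final line.
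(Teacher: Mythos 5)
Your derivation follows essentially the same chain of Bayes identities as the paper (modulo which private block you peel off first): use the conditional-independence hypothesis to split the joint posterior, apply Bayes' rule to each factor, and rewrite $p(\gD,\vtheta_s)=\int p(\gD\mid\vtheta_1,\vtheta_s)\,p(\vtheta_1,\vtheta_s)\,d\vtheta_1$. Carried out exactly, these steps give
\begin{equation*}
p(\vtheta_1,\vtheta_2,\vtheta_s\mid\gD)\;\propto\;\frac{p(\gD\mid\vtheta_2,\vtheta_s)\,\bigl[p(\gD\mid\vtheta_1,\vtheta_s)\,p(\vtheta_1,\vtheta_s)\bigr]\,p(\vtheta_2,\vtheta_s)}{\int p(\gD\mid\vtheta_1,\vtheta_s)\,p(\vtheta_1,\vtheta_s)\,d\vtheta_1},
\end{equation*}
where the bracketed factor is $p(\vtheta_1,\vtheta_s\mid\gD)$ up to the genuine constant $p(\gD)$ --- and this is exactly where the paper's own proof stops: its final line has $p(\vtheta_1,\vtheta_s\mid\gD)$ in the numerator, not the prior $p(\vtheta_1,\vtheta_s)$ (the statement as printed appears to have dropped the conditioning on $\gD$). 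The gap in your argument is in how you dispose of the residual likelihood $p(\gD\mid\vtheta_1,\vtheta_s)$: you (i) import two assumptions that are not hypotheses of the lemma, namely $p(\gD\mid\vtheta_1,\vtheta_2,\vtheta_s)=p(\gD\mid\vtheta_2,\vtheta_s)$ and prior conditional independence of $\vtheta_1,\vtheta_2$ given $\vtheta_s$, to conclude $p(\gD\mid\vtheta_1,\vtheta_s)=p(\gD\mid\vtheta_s)$, and then (ii) absorb $p(\gD\mid\vtheta_s)$ into the proportionality constant because it does not depend on $\vtheta_1,\vtheta_2$. Step (ii) is not legitimate: the left-hand side is a density in $(\vtheta_1,\vtheta_2,\vtheta_s)$, so a $\vtheta_s$-dependent factor is not inert. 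By the same reasoning you could absorb the denominator integral itself, which also depends only on $\vtheta_s$, and the lemma would collapse --- the point of keeping that denominator is precisely its $\vtheta_s$-dependence, which Lemma~2 turns into the penalty $\frac{1}{2}\vv^\top\mOmega\vv$. Concretely, even under your extra assumptions the ratio between the exact posterior and the right-hand side of the printed equation is $p(\gD\mid\vtheta_s)/p(\gD)$, which varies with $\vtheta_s$, so the claimed proportionality does not follow from your argument.

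The fix is not to discard the factor at all: stop at the displayed identity above, equivalently with $p(\vtheta_1,\vtheta_s\mid\gD)$ in the numerator, which needs no assumption beyond the stated conditional independence; and flag that the printed $p(\vtheta_1,\vtheta_s)$ in \eqref{eq:bayes2} should read $p(\vtheta_1,\vtheta_s\mid\gD)$ rather than trying to force the literal form with additional modelling assumptions.
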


\begin{proof}
Provided in the appendix.
\end{proof}

Lemma~\ref{lem1} presupposes that $p(\vtheta_1, \vtheta_2 \mid \vtheta_s, \mathcal{D}) = p(\vtheta_1 \mid \vtheta_s, \mathcal{D})p(\vtheta_2 \mid \vtheta_s, \mathcal{D})$, i.e.,  $\vtheta_1$ and~$\vtheta_2$ are conditionally independent given $\vtheta_s$ and the dataset $\mathcal{D}$. 
While this must be checked in applications, it is suitable for our setting, since we want both networks, $f_1(\mathcal{D};\vtheta_1, \vtheta_s)$ and $f_2(\mathcal{D};\vtheta_2, \vtheta_s)$, to train independently well.

To derive our loss we study the components on the right of~\eqref{eq:bayes2}. We start with the integral in the denominator, for which we seek a closed form.
Suppose we have trained the first model and seek to update the parameters of the second one while avoiding forgetting.
The following lemma provides an expression for the denominator 
of~\eqref{eq:bayes2}.

\begin{restatable}{lemma}{lemmatwo}
\label{lem2} Suppose we have the maximum likelihood estimate $(\hat{\vtheta}_1, \hat{\vtheta}_s)$ for the first model, write $\mathrm{Card}(\vtheta_1) + \mathrm{Card}(\vtheta_s) = p_1 + p_s = p$, and let the negative Hessian $\mH_p(\hat{\vtheta}_1, \hat{\vtheta}_s)$ of the log posterior probability distribution $\log p(\vtheta_1, \vtheta_s \mid \mathcal{D})$ evaluated at $(\hat{\vtheta}_1, \hat{\vtheta}_s)$ be partitioned into four blocks corresponding to $(\vtheta_1, \vtheta_s)$ as\vspace{-1mm}
\[
\mH_p(\hat{\vtheta}_1, \hat{\vtheta}_s)=
\left[
\begin{array}{c|c}
\mH_{11} & \mH_{1s} \\
\hline
\mH_{s1} & \mH_{ss}
\end{array}
\right].
\]
If the parameters of each model follow Normal distributions, i.e., $(\vtheta_1, \vtheta_s) \sim \mathcal{N}_p(\mathbf{0}, \sigma^2 \mI_p)$, with $\mI_p$ the $p$-dimensional identity matrix, 
then \ky{the denominator of~\eqref{eq:bayes2}, $A = \int p(\mathcal{D} \mid \vtheta_1, \vtheta_s) p(\vtheta_s, \vtheta_1) d\vtheta_1$ can be written as}
\begin{equation}
 A = \exp{\{  l_p(\hat{\vtheta}_1, \hat{\vtheta}_s) -\frac{1}{2}\vv^\top\mOmega \vv\}}  \times (2\pi)^{p_1/2} \lvert \det(\mH_{11}^{-1})\rvert^{1/2},
\end{equation}
where $\vv = \vtheta_s - \hat{\vtheta}_s$ and $\mOmega = \mH_{ss} - \mH_{1s}^\top\mH_{11}^{-1} \mH_{1s}$\ .
\end{restatable}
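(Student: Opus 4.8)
The plan is to evaluate $A$ by a Laplace (saddle-point) approximation: expand the log-integrand to second order about its mode, pull out the factor that does not depend on $\vtheta_1$, and carry out the remaining Gaussian integral over $\vtheta_1$ alone, the block structure of $\mH_p$ forcing the Schur complement $\mOmega$ to appear.

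First I would write the integrand as $p(\mathcal{D}\mid\vtheta_1,\vtheta_s)\,p(\vtheta_1,\vtheta_s) = \exp\{l_p(\vtheta_1,\vtheta_s)\}$ with $l_p(\vtheta_1,\vtheta_s) := \log p(\mathcal{D}\mid\vtheta_1,\vtheta_s) + \log p(\vtheta_1,\vtheta_s)$; this is the log posterior up to the additive constant $\log p(\mathcal{D})$, so it has the same Hessian, namely $-\mH_p$. I would then Taylor-expand $l_p$ to second order about $(\hat\vtheta_1,\hat\vtheta_s)$. Because the Gaussian prior $\mathcal{N}_p(\mathbf{0},\sigma^2\mI_p)$ adds $\tfrac1{\sigma^2}\mI_p$ to the Hessian of the negative log-likelihood, $\mH_p$ — and in particular its block $\mH_{11}$ — is positive definite, so the quadratic form is nondegenerate; treating $(\hat\vtheta_1,\hat\vtheta_s)$ as the maximizer of $l_p$, the gradient term vanishes and, writing $\vu = \vtheta_1 - \hat\vtheta_1$, $\vv = \vtheta_s - \hat\vtheta_s$ and using $\mH_{s1} = \mH_{1s}^\top$,
\[ l_p(\vtheta_1,\vtheta_s) \;\approx\; l_p(\hat\vtheta_1,\hat\vtheta_s) \;-\; \tfrac12\bigl(\vu^\top\mH_{11}\vu + 2\,\vu^\top\mH_{1s}\vv + \vv^\top\mH_{ss}\vv\bigr). \]
(This is exact when $\log p(\mathcal{D}\mid\cdot)$ is quadratic in the parameters, and is the approximation implicit in the statement otherwise.)

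Next I would substitute this into $A$, take the $\vtheta_1$-free factor $\exp\{l_p(\hat\vtheta_1,\hat\vtheta_s)\}$ outside the integral, and complete the square in $\vu$:
\[ \vu^\top\mH_{11}\vu + 2\,\vu^\top\mH_{1s}\vv + \vv^\top\mH_{ss}\vv \;=\; \bigl(\vu + \mH_{11}^{-1}\mH_{1s}\vv\bigr)^\top\mH_{11}\bigl(\vu + \mH_{11}^{-1}\mH_{1s}\vv\bigr) + \vv^\top\mOmega\,\vv, \]
where $\mOmega = \mH_{ss} - \mH_{1s}^\top\mH_{11}^{-1}\mH_{1s}$ is exactly the Schur complement of $\mH_{11}$ in $\mH_p$. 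The term $\vv^\top\mOmega\vv$ also leaves the integral, and what remains is the normalization of a $p_1$-dimensional Gaussian in $\vu$ (the shift by $\mH_{11}^{-1}\mH_{1s}\vv$ is immaterial by translation invariance), which equals $(2\pi)^{p_1/2}\lvert\det(\mH_{11}^{-1})\rvert^{1/2}$. Collecting the three factors gives the claimed expression for $A$.

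The Gaussian-integral and determinant bookkeeping is routine; the step needing care is the block completion of the square — keeping the symmetry $\mH_{s1}=\mH_{1s}^\top$ straight and identifying $\mOmega$ as the Schur complement. The one genuine subtlety is the status of $(\hat\vtheta_1,\hat\vtheta_s)$: the linear term in the Taylor expansion drops only if it is a stationary point of the full log posterior $l_p$ (a MAP estimate), so if it is literally the maximum-likelihood estimate one must either regard the prior gradient $-\hat\vtheta/\sigma^2$ as negligible at the relevant scale or absorb it, and read the displayed ``$=$'' as the Laplace approximation (exact in the Gaussian-linear case).
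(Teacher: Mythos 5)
Your proposal is correct and follows essentially the same route as the paper's proof: write the integrand as $\exp\{l_p(\vtheta_1,\vtheta_s)\}$, Taylor-expand to second order about $(\hat\vtheta_1,\hat\vtheta_s)$, complete the square in the $\vtheta_1$ block so the Schur complement $\mOmega$ appears, and evaluate the remaining $p_1$-dimensional Gaussian integral to get $(2\pi)^{p_1/2}\lvert\det(\mH_{11}^{-1})\rvert^{1/2}$. Your closing remark about the vanishing gradient requiring $(\hat\vtheta_1,\hat\vtheta_s)$ to be a stationary point of $l_p$ (MAP rather than MLE, unless the prior gradient is negligible) is a subtlety the paper's proof silently glosses over, but it does not change the argument.
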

\begin{proof}
Provided in the appendix.
\end{proof}

Lemma~\ref{lem2} requires the maximum likelihood estimate $(\hat{\vtheta}_1, \hat{\vtheta}_s)$, which can be hard to obtain with deep networks, since they have non-convex objective functions. In practice, one can train the network to convergence and treat the resulting parameters as maximum likelihood estimates. Our experiments show that the parameters obtained without optimizing to convergence can be used effectively. Moreover~\citet{Haeffele2017} showed that networks relying on \textit{positively homogeneous functions}\ have critical points that are either global minimizers or saddle points, and that training to convergence yields near-optimal solutions, which correspond to true maximum likelihood estimates.

Following Lemmas~\ref{lem1} and~\ref{lem2}, as shown in the appendix, 
\begin{equation}
\label{eq:sol}
\begin{aligned}
\log p(\vtheta \mid \mathcal{D}) \propto& \log p(\mathcal{D} \mid \vtheta_2, \vtheta_s) + \log p(\vtheta_2, \vtheta_s)  \\ 
&+\log p(\vtheta_1, \vtheta_s \mid \mathcal{D})  +  \frac{1}{2}\vv^\top \mOmega \vv.
\end{aligned}
\end{equation}

To derive a loss function that prevents multi-model forgetting,
consider \eqref{eq:sol}. The first term on its right-hand side corresponds to the log likelihood of the second model and can be replaced by the cross-entropy $\mathcal{L}_2(\vtheta_2, \vtheta_s)$, and  if we use a Gaussian prior on the parameters, the second term encodes an $\normltwo$ regularization. Since \eqref{eq:sol} depends only on the log likelihood of the second model $f_2(\mathcal{D};\vtheta_2, \vtheta_s)$, the information learned from the first model $f_1(\mathcal{D};\vtheta_1, \vtheta_s)$ must reside in the conditional posterior probability $\log p(\vtheta_1, \vtheta_s \mid \mathcal{D})$, and the final term, $\frac{1}{2}\vv^\top \mOmega \vv$, must represent the interactions between the models $f_1(\mathcal{D};\vtheta_2, \vtheta_s)$ and $f_2(\mathcal{D};\vtheta_1, \vtheta_s)$. This term will not appear in a standard single-model forgetting scenario. Let us examine these terms more closely.

The posterior probability $p(\vtheta_1,\vtheta_s\mid\mathcal{D})$ is intractable, so we apply a Laplace approximation \citep{MacKay92}; we approximate the log posterior using a second-order Taylor expansion around the maximum likelihood estimate $(\hat{\vtheta}_1, \hat{\vtheta}_s)$. \ky{This yields 
\begin{equation}
\label{eq:apx_taylor2}
\begin{aligned}
\log p(\vtheta_1,\vtheta_s\mid\mathcal{D}) =& \log p(\hat{\vtheta}_1, \hat{\vtheta}_s \mid \mathcal{D}) \\ 
&-\frac{1}{2} (\vtheta_1', \vtheta_s')^\top \mH_p (\vtheta_1', \vtheta_s'),
\end{aligned}
\end{equation}
where $(\vtheta_1', \vtheta_s') = (\vtheta_1, \vtheta_s) - (\hat{\vtheta}_1, \hat{\vtheta}_s)$, $\mH_p(\hat{\vtheta}_1, \hat{\vtheta}_s)$ is the negative Hessian of the log posterior evaluated at the maximum likelihood estimate. As the first derivative is evaluated at the maximum likelihood estimate, it equals zero.
}

\Eqref{eq:apx_taylor2} yields a Gaussian approximation to the posterior with mean $(\hat{\vtheta}_1, \hat{\vtheta}_s)$ and covariance matrix $\mH_p^{-1}$, i.e.,
\ky{
\begin{equation}
\label{eq:gaussian_approx}
p(\vtheta_1, \vtheta_s \mid \mathcal{D}) \propto \exp{\big\{-\frac{1}{2} (\vtheta_1', \vtheta_s')^\top \mH_p (\vtheta_1', \vtheta_s')\big\}}.
\end{equation}
}
Our parameter space is too large to compute the inverse of the negative Hessian $\mH_p$, so we replace it with the diagonal of the Fisher information, $\diag(\mF)$. This approximation falsely presupposes that the parameters $(\vtheta_1, \vtheta_s)$ are independent, but it has already proven effective~\citep{Kirkpatrick17,Pascanu14}. One of its main advantages is that we can compute the Fisher information from the squared gradients, thereby avoiding any need for second derivatives.

Using~\eqref{eq:gaussian_approx} and the Fisher approximation we can express the log posterior  as
\begin{equation}
\label{eq:approx_posterior}
\log p(\vtheta_1, \vtheta_s \mid \mathcal{D}) \propto \frac{\alpha}{2} \sum_{\theta_{s_i} \in \vtheta_s} F_{\theta_{s_i}} (\theta_{s_i} - \hat{\theta}_{s_i})^2\;,
\end{equation}
where $F_{\theta_{s_i}}$ is the diagonal element corresponding to parameter $\theta_{s_i}$ in the diagonal approximation of the Fisher information matrix, which can be obtained from the trained model $f_1(\mathcal{D};\vtheta_1, \vtheta_s)$.

Now consider the last term in~\eqref{eq:sol}, noting that $\mOmega = \mH_{ss} - \mH_{1s}^\top \mH_{11}^{-1} \mH_{1s}$, as defined in Lemma~\ref{lem2}. As our previous approximation relies on the assumption of a diagonal Fisher information matrix, we have $\mH_{1s} = \bm{0}$, leading to $\mOmega = \mH_{ss}$, so
\begin{equation}
\frac{1}{2}\vv^\top\mOmega \vv = \frac{1}{2}\sum_{\theta_{s_i} \in \vtheta_s} F_{\theta_{s_i}} (\theta_{s_i} - \hat{\theta}_{s_i})^2\;.\vspace{-2mm}
\label{eq:approx_vOv}
\end{equation}

The last two terms on the right-hand side of~\eqref{eq:sol}, as expressed in~\eqref{eq:approx_posterior} and~\eqref{eq:approx_vOv}, can then be grouped. Combining the result with the first two terms, discussed below \eqref{eq:sol}, yields our Weight Plasticity Loss, 
\ky{
\begin{equation}
\label{eq:wpl}
\begin{aligned}
\mathcal{L}_{\text{WPL}}(\vtheta_2, \vtheta_s) =& \mathcal{L}_2(\vtheta_2, \vtheta_s)  + \frac{\lambda}{2}
(\left\rVert \vtheta_s \right\rVert^2 + \left\rVert \vtheta_2 \right\rVert^2 ) \\ 
&+\frac{\alpha}{2} \sum_{\theta_{s_i} \in \vtheta_s} F_{\theta_{s_i}} (\theta_{s_i} - \hat{\theta}_{s_i})^2,
\end{aligned}
\end{equation}
}
where $F_{\theta_{s_i}}$ is the diagonal element corresponding to parameter $\theta_{s_i}$ in the Fisher information matrix obtained from the trained first model $f_1(\mathcal{D};\vtheta_1, \vtheta_s)$. We omit the terms depending on $\vtheta_1$ in \eqref{eq:approx_posterior} because we are optimizing with respect to $(\vtheta_2, \vtheta_s)$ at this stage. 
The Fisher information in the last term encodes the importance of each shared weight for the first model's performance, so WPL encourages preserving any shared parameters that were important for the first model, while allowing others to undergo larger changes and thus to improve the accuracy of the second model.

\subsubsection{Relation to Elastic weight consolidation}
The final loss function obtained in \eqref{eq:wpl} may appear similar to that obtained by~\citet{Kirkpatrick17} when formulating their Elastic Weight Consolidation (EWC) to address catastrophic forgetting. However, the problem we address here is fundamentally different. \citet{Kirkpatrick17} tackle sequential learning on different tasks, where a \emph{single model}\ is sequentially trained using \emph{two datasets},\ and their goal is to maximize the posterior $p(\vtheta \mid \mathcal{D} ) = p(\vtheta \mid \mathcal{D}_1, \mathcal{D}_2)$. By relying on Laplace approximations in neural networks~\citep{MacKay92} and the connection between the Fisher information matrix and second-order derivatives~\citep{Pascanu14}, EWC is then formulated as the loss $\mathcal{L}(\vtheta) = \mathcal{L}_{\rm B}(\vtheta) + \sum_i \frac{\lambda}{2} F_i(\theta_i - \theta_{{\rm A},i}^\star)^2$, where A and B refer to \emph{two different tasks},\ $\vtheta$ encodes the network parameters and $F_i$ is the Fisher information of $\theta_i$.

Here we consider scenarios with a \emph{single dataset}\ but \emph{two models}\ with shared parameters as shown in~\Figref{fig:compare_wpl}, and aim to maximize the posterior $p(\vtheta_1, \vtheta_2, \vtheta_s \mid\mathcal{D})$. The resulting WPL combines the original loss of the second model, a Fisher-weighted MSE term on the shared parameters and an $\normltwo$ regularizer on the parameters of the second model. 
More importantly, the last term in \eqref{eq:sol}, $\vv^\top\mOmega \vv$, is specific to the multi-model case, since it encodes the interaction between the two models; it never appears in the EWC derivation. Because we adopt a Laplace approximation based on the diagonal Fisher information matrix, as shown in \eqref{eq:approx_vOv}, this term can then be grouped with that of \eqref{eq:approx_posterior}.
In principle, however, other approximations of $\vv^\top\mOmega \vv$ could be used, such as a Laplace one with a full covariance matrix, which would yield a final loss that differs fundamentally from the EWC one.
In any event, under mild assumptions we obtain a statistically-motivated loss function that is useful in practice. We believe this to be a valuable contribution in itself, but, more importantly, we show below that it can significantly reduce multi-model forgetting.

\begin{figure}[]%
    \center%
    \includegraphics[width=\linewidth]{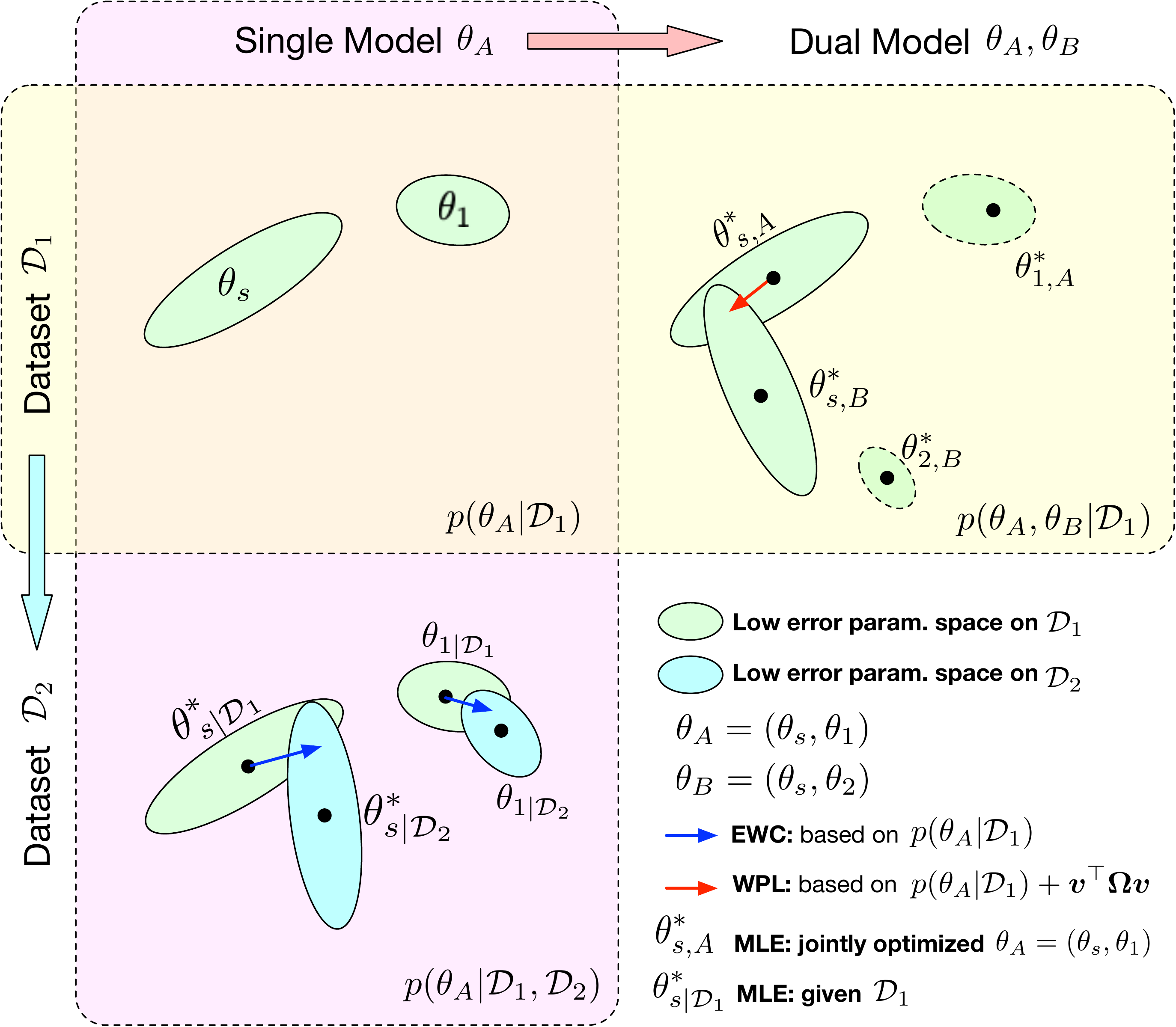}
    \vspace{-0.2cm}
    \caption{
    \textbf{Comparison between EWC and WPL.} The ellipses in each subplot represent parameter regions corresponding to low error. (\textbf{Top left}) Both methods start with a single model, with parameters $\theta_A = \{\theta_s,\theta_1\}$, trained on a single dataset $\mathcal{D}_1$. (\textbf{Bottom left}) EWC regularizes \emph{all} parameters based on $p(\theta_A|\mathcal{D}_1)$ to train the same initial model on a new dataset $\mathcal{D}_2$. (\textbf{Top right}) By contrast, WPL makes use of the initial dataset $\mathcal{D}_1$ and regularizes only the shared parameters $\theta_s$ based on both $p(\theta_A|\mathcal{D}_1)$ and $\vv^\top \mOmega \vv$, while the parameters $\theta_2$ can move freely.
    }
    \label{fig:compare_wpl}
    \vspace{-0.0cm}
\end{figure}

\subsection{WPL for Neural Architecture Search}
\label{sec:enas}

In the previous section, we considered only two models being trained sequentially, 
but in practice one often seeks to train three or more models.  Our approach is then  unchanged, but each model shares parameters with several other models, 
which entails using diagonal approximations to Fisher information matrices for all previously-trained models from~\eqref{eq:sol}. In the remainder of this section, we discuss how our approach can be used for neural architecture search.

Consider using our WPL within the ENAS strategy of~\citet{Pham18}. 
ENAS is a reinforcement-learning-based method that consists of two training processes: 
1) sequentially train sampled models with shared parameters; and 2) train a controller RNN that generates model candidates. Incorporating our WPL within ENAS only affects 1).

The first step of ENAS consists of sampling a fixed number of architectures from the RNN controller, and training each architecture on $B$ batches. This implies that our requirement for access to the maximum likelihood estimate of the previously-trained models is not satisfied, but we verify  that in practice our WPL remains effective in this scenario. 
After sufficiently many epochs it is likely that all the parameters of a newly-sampled architecture are shared with previously-trained ones, and then we can consider that all parameters of new models are shared.

\begin{figure*}[t]%
    \center%
    \includegraphics[width=0.85\textwidth]{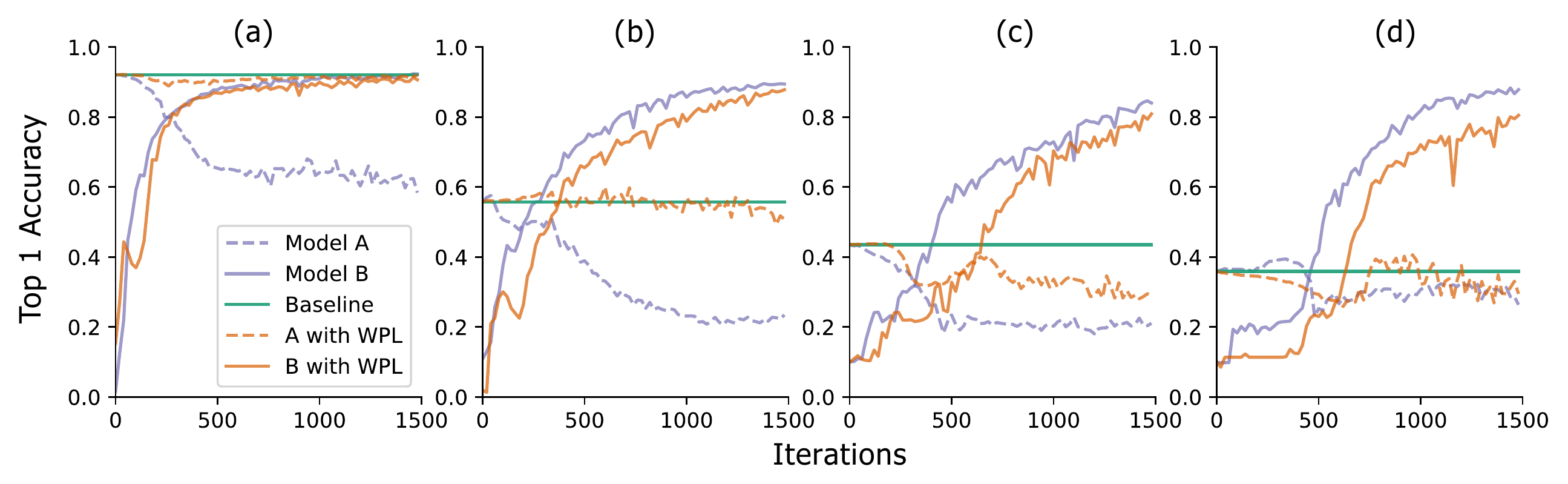}
    \caption{
    \textbf{From strict to loose convergence.} We conduct experiments on MNIST with models A and B with shared parameters, and report the accuracy of Model A before training Model B (baseline, green) and the accuracy of Models A and B while training Model B with (orange) or without (blue) WPL. In \captiona\, we show the results for strict convergence: A is initially trained to convergence. We then relax this assumption and train A to around 55\%~\captionb, 43\%~\captionc, and 38\%~\captiond~of its optimal accuracy. We see that WPL is highly effective when A is trained to at least 40\% of optimality; below, the Fisher information becomes too inaccurate to provide reliable importance weights. Thus WPL helps to reduce multi-model forgetting,
    even when the weights are not optimal.
   WPL reduced forgetting
   by up to 99.99\% for \captiona\ and \captionb, and by up to 2\% for \captionc.}
    \label{fig:mnist_less_converge}
\end{figure*}

At the beginning of the search, the parameters of all models are randomly initialized. Adopting WPL directly from the start would therefore make it hard for the process to learn anything, as it would encourage some parameters to remain random. 
To better satisfy our assumption that the parameters of previously-trained models should be optimal, we follow the original ENAS training strategy for $n$ epochs, with $n=5$ for RNN search and $n=3$ for CNN search in our experiments. We then incorporate our WPL and store the optimal parameters after each architecture is trained.
We also update the Fisher information, which adds virtually no computational overhead, because ${F_\theta}_i = ({\partial \gL} / {\partial \theta_i})^2$, where $\gL = \sum_i \gL_i$, with $i$ indexing the previously-sampled architectures, and the derivatives are already computed for back-propagation. To ensure that these updates use the contributions from all previously-sampled architectures, we use a momentum-based update expressed as ${F_\theta}_i^t = (1- \eta){F_\theta}_i^{t-1} + \eta ({\partial \gL} / {\partial \theta_i})^2$, with $\eta = 0.9$. Since such Fisher information is not computed at the MLE of the parameters, we flush the global Fisher buffer to zero every three epochs, yielding an increasingly accurate estimate of the Fisher information as optimization proceeds. We also use a scheduled decay for $\alpha$ in \eqref{eq:wpl}.

\section{Experiments}

We first evaluate our weight plasticity loss (WPL) in the general scenario of training two models sequentially, both in the strict convergence case and when the weights of the first model are sub-optimal. We then evaluate the performance of our approach within the ENAS framework.
\subsection{General Scenario: Training Two Models}
To test WPL in the general scenario, we used the MNIST handwritten digit recognition dataset~\citep{lecun10}. We designed two feed-forward networks with 4 (Model A) and 6 (Model B) layers, respectively. All the layers of A are shared by B.

Let us first evaluate our approach in the strict convergence case. To this end, we trained A until convergence, thus obtaining a solution close to the MLE $\hat{\vtheta}_{\rm A} = (\hat{\vtheta}_1, \hat{\vtheta}_s)$, since all our operations are positively homogeneous~\citep{Haeffele2017}. 
To compute the Fisher information, we used the backward gradients of~$\vtheta_s$ calculated on 200 images in the validation set. We then initialized $\vtheta_s$ of Model~B, $f_{\rm B}(\gD; (\vtheta_2, \vtheta_s))$, as $\hat{\vtheta}_s$ and trained B by standard SGD with respect to all its parameters. 
Figure~\ref{fig:mnist_less_converge}\captiona\  compares the performance of training Model B with and without WPL. Without WPL the performance of A degrades as training B progresses, but using WPL allows us to maintain the initial performance of A, indicated as Baseline in the plot. This entails no loss of performance for B, whose  final accuracy is virtually the same both with and without WPL.

The assumption of optimal weights is usually hard to enforce.
We therefore now turn to the more realistic loose convergence scenario. To evaluate the influence of sub-optimal weights for Model A on our approach, we trained Model A to different, increasingly lower, top~1 accuracies.  As shown in Figure~\ref{fig:mnist_less_converge}\captionb~ and \captionc, even in this setting our approach still significantly reduces multi-model forgetting.
We can quantify the relative reduction rate of such forgetting
as $d_{\rm A} -d_{{\rm A}+\rm{WPL}} / d_{\rm A}$, where $d = acc_{\rm A}^* - acc$ is A's accuracy decay after training B. Our WPL can reduce multi-model forgetting by
up to 99\% for a converged model, and by 52\% even for the loose convergence case.
This suggests that the Fisher information remains a reasonable empirical approximation to the weights' importance even when our optimality assumption is not satisfied. 

\subsection{WPL for Neural Architecture Search}
\label{sec:exp_wpl}

\begin{figure*}[t]%
    \centering
    \resizebox{0.85\textwidth}{!}{
        \includegraphics[width=\textwidth]{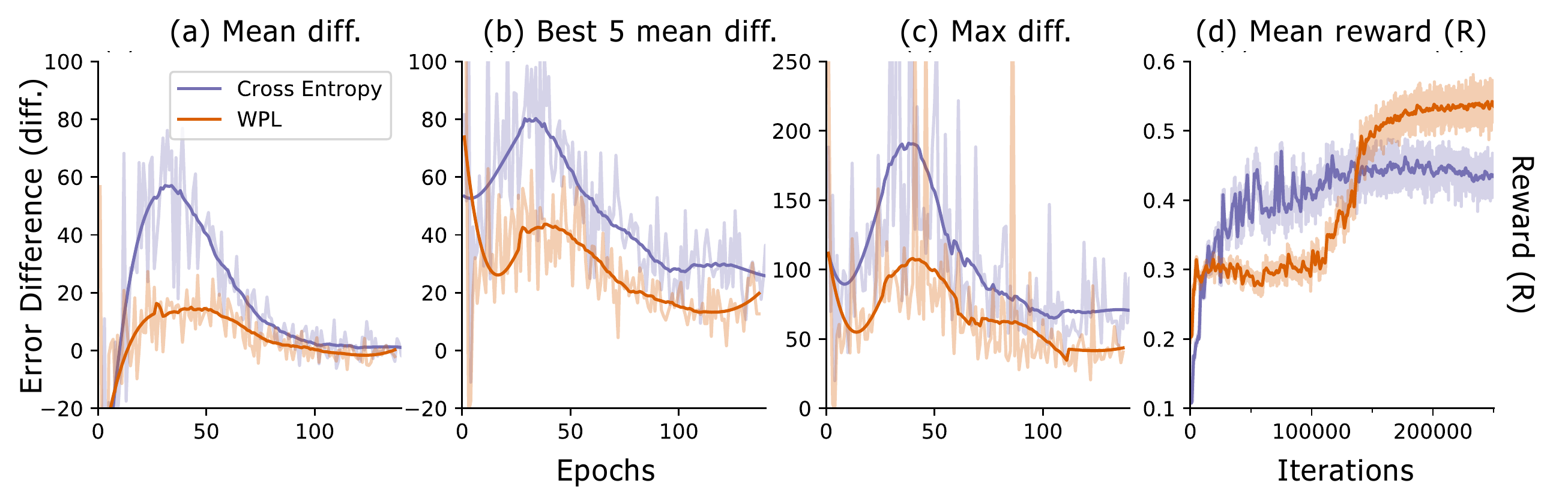} 
    }
    \caption{\textbf{Error difference during neural architecture search.} 
    For each architecture, we compute the RNN error differences $err_2 - err_1$, where $err_1$ is the error right after training this architecture and $err_2$ the one after all architectures are trained in the current epoch. We plot (a) the mean difference over all sampled models, (b) the mean difference over the 5 models with lowest $err_1$, and (c) the max difference over all models. The plots show that WPL reduces multi-model forgetting;
    the error differences are much closer to 0. Quantitatively, the forgetting
    reduction can be up to 95\% for \captiona, 59\% for \captionb~and 51\% for \captionc. 
    In (d), we plot the average reward of the sampled architectures as a function of training iterations. Although WPL initially leads to lower rewards, due to a large weight $\alpha$ in \eqref{eq:wpl}, by reducing the forgetting
    it later allows the controller to sample better architectures, as indicated by the higher reward in the second half.
    \label{fig:prev_rn}%
}    

\end{figure*}

We demonstrate the effectiveness of our WPL in a real-world application, neural architecture search. We incorporate WPL in the ENAS framework~\citep{Pham18}, which relies on weight-sharing across model candidates to speed up the search and thus, while effective, will suffer from multi-model forgetting even with random dropping of weights and output dropout.
To show this, we examine how the previously-trained architectures are affected by the training of new ones by evaluating  the prediction error of each sampled architecture on a fraction of the validation dataset immediately after it is trained, denoted by $err_1$, and at the end of the epoch, denoted by $err_2$.
A positive difference $err_2 - err_1$ for a specific architecture indicates that it has been forced to forget
by others.

We performed two experiments: RNN cell search on the PTB dataset and CNN micro-cell search on the CIFAR10 dataset. We report the mean error difference for all sampled architectures, the mean error difference for the 5 architectures with the lowest $err_1$, and the maximum error difference over all sampled architectures. Figure~\ref{fig:prev_rn}\captiona, \captionb~and \captionc~plot these as functions of the training epochs for the RNN case, and similar plots for CNN search are in the appendix.
The plots show that without WPL the error differences are much larger than 0, clearly displaying the multi-model forgetting
effect. This is particularly pronounced in the first half of training, which can have a dramatic effect on the final results, as it corresponds to the phase where the algorithm searches for promising architectures. WPL significantly reduces the forgetting,
as shown by much lower error differences. With WPL, these differences tend to decrease over time, emphasizing that the observed Fisher information encodes an increasingly reliable notion of weight importance as training progresses. Owing to limited computational resources we estimate the Fisher information using only small validation batches, but use of larger batches could further improve our results.

In Figure~\ref{fig:prev_rn}\captiond, we plot the average reward of all sampled architectures as a function of the training iterations. In the first half of training, the models trained with WPL tend to have lower rewards. This can be explained by the use of a large value for $\alpha$ in \eqref{eq:wpl} during this phase; while such a large value may prevent the best models from achieving as high a reward as possible, it has the advantage of preventing the forgetting
of good models, and thus avoiding their being discarded early. This is shown by the fact that, in the second half of training, when we reduce $\alpha$, the mean reward of the architectures trained with WPL is higher than without using it. In other words, our approach allows us to maintain better models until the end of training.

When the search is over, we train the best architecture from scratch and evaluate its final accuracy. Table~\ref{tab:result} compares the results obtained without (ENAS) and with WPL (ENAS+WPL) with those  from the original ENAS paper (ENAS*), which were obtained after conducting an extensive hyper-parameter search. For both datasets, using WPL improves final model accuracy, thus showing the importance of overcoming multi-model forgetting.
In the case of PTB, our approach even outperforms ENAS*, without extensive hyper-parameter tuning. Based on the gap between ENAS and ENAS*, we anticipate that such a tuning procedure could further boost our results. In any event, we believe that these results already clearly show the benefits of reducing multi-model forgetting.

\begin{table}[t]
    \centering
    
    \caption{\textbf{Results of the best models found.} We take the best model obtained during the search and train it from scratch. 
    ENAS* corresponds to the results of~\citet{Pham18} obtained after extensive hyper-parameter search, while ENAS and ENAS+WPL were trained in comparable conditions. For both RNN and CNN search, our WPL gives a significant boost to ENAS, thus showing the importance of overcoming multi-model forgetting.
    In the RNN case, our approach outperforms ENAS* without requiring extensive hyper-parameter tuning.  
    }
   \label{tab:result}
   \resizebox{\linewidth}{!}{
    \begin{tabular}{c|c|c|c|c}
    \hline
        Datasets & Metric & ENAS* & ENAS & \textbf{ENAS + WPL}\\ \hline \hline
         PTB & perplexity  & 63.26 & 65.01 & \textbf{61.9 }\\ \hline
         CIFAR10 & top-1 error & \textbf{3.54} & 4.87 & 3.81 \\ \hline
         
    \end{tabular}
    }
    \vspace{-0.2cm}
\end{table}

\begin{figure*}[t]
	\centering
	\resizebox{\linewidth}{!}{
		\includegraphics{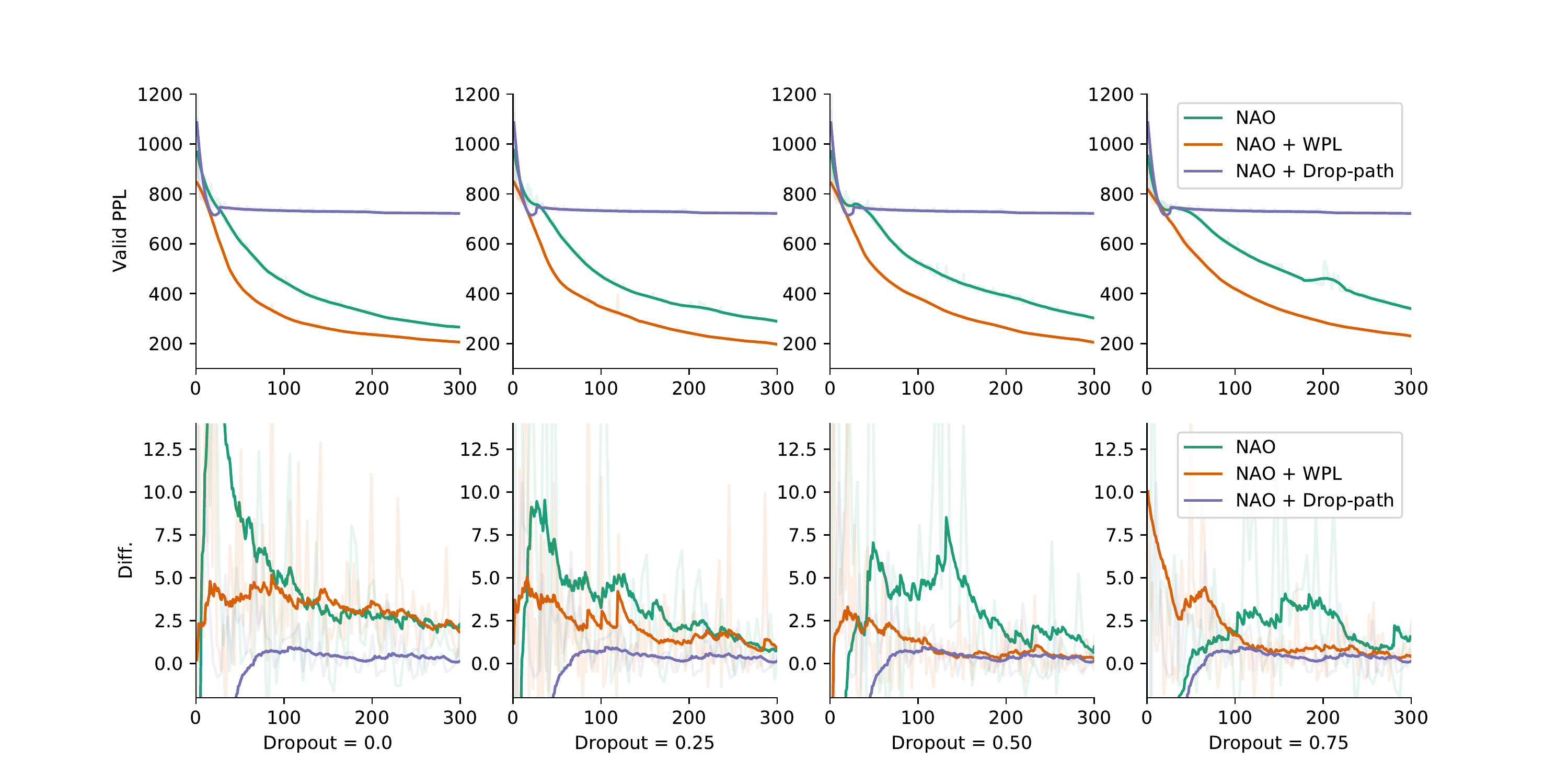}
	}
	\vspace{-0.5cm}
	\caption{\textbf{Comparison of different output dropout rates for NAO.} We plot the mean validation perplexity while searching for the best architecture (top) and the best 5 model's error differences (bottom) for four different dropout rates. Note that path dropping in NAO prevents learning shortly after model initialization \ky{with all different dropout rates}. At all the dropout rates, our WPL achieves lower error differences, i.e., it reduces multi-model forgetting, as well as speeds up training.}
	\label{fig:nao}
\end{figure*}

\label{sec:appdix_nao}
\subsection{Neural Architecture Optimization}

Our approach is general, and its use in the context of neural architecture search is not limited to ENAS. To demonstrate this, we applied it to the neural architecture optimization (NAO) method of~\cite{Luo18}, which also exploits weight-sharing in the search phase.
In this context, we therefore investigate (i) whether multi-model forgetting occurs, and if so, (ii) the effectiveness of our approach in the NAO framework. Due to resource and time constraints, we focus our experiments mainly on the search phase, as training the best searched model from scratch takes around 4 GPU days. To evaluate the influence of the dropout strategy of~\citet{Bender18}, we test NAO with or without random path-dropping and with four output dropout rates from 0 to 0.75 by steps of 0.25. As in~\Secref{sec:exp_wpl}, in \Figref{fig:nao}, we plot the mean validation perplexity and the best five model's error differences for all models that are sampled during a single training epoch. For random path-dropping, since \citet{Luo18} exploit a more aggressive dropping policy than that used in~\citep{Bender18}, we can see that validation perplexity quickly plateaus. 
Hence we do not add our WPL to the path dropout strategy, but use it in conjunction with output dropout.

At all four different dropout rates, WPL clearly reduces multi-model forgetting and accelerates training. The level of forgetting decreases with the dropout rate, but our loss always further reduces it. Among the three methods, NAO with path dropping suffers the least from forgetting. However, this is only due to the fact that it does not learn properly. By contrast, our WPL reduces multi-model forgetting while still allowing the models to learn. This shows that our approach generalizes beyond ENAS.

\section{Conclusion}

This paper has identified the problem of multi-model forgetting in the context of sequentially training multiple models: the shared weights of previously-trained models are overwritten during training of subsequent models, leading to performance degradation. We show that the degree of degradation is linked to the proportion of shared weights, and introduce a statistically-motivated weight plasticity loss~(WPL) to overcome this. Our experiments on multi-model training and on neural architecture search clearly show the effectiveness of WPL in reducing multi-model forgetting and yielding better architectures, leading to improved results in both natural language processing and computer vision tasks.
We believe that the impact of WPL goes beyond the tasks studied in this paper. In future work, we plan to integrate WPL within other neural architecture search strategies in which weight sharing occurs and to study its use in other multi-model contexts, such as for ensemble learning.

\bibliography{wpl}
\bibliographystyle{icml2019}
\newpage

\end{document}